\theoremstyle{plain}
\newtheorem{theorem}{Theorem}[section]
\theoremstyle{definition}
\theoremstyle{remark}
\newcommand{\phm}[1]{\vspace{.4em} \noindent\textbf{#1}\hspace{.5em}}
\icmltitlerunning{\textit{Justitia}: Fair and Efficient Scheduling of Task-parallel LLM Agents with Selective Pampering}
\begin{document}

\twocolumn[
\icmltitle{\textit{Justitia}: Fair and Efficient Scheduling of Task-parallel LLM Agents\\ with Selective Pampering}

\icmlsetsymbol{equal}{*}
\begin{icmlauthorlist}
\icmlauthor{Mingyan Yang}{equal,yyy}
\icmlauthor{Guanjie Wang}{equal,yyy}
\icmlauthor{Manqi Luo}{yyy}
\icmlauthor{Yifei Liu}{yyy}
\icmlauthor{Chen Chen}{yyy}
\icmlauthor{Han Zhao}{yyy}
\icmlauthor{Yu Feng}{yyy}
\icmlauthor{Quan Chen}{yyy}
\icmlauthor{Minyi Guo}{yyy}
\end{icmlauthorlist}
\icmlaffiliation{yyy}{Shanghai Jiao Tong University}
\icmlcorrespondingauthor{Chen
Chen}{chen-chen@sjtu.edu.cn}
\icmlkeywords{Machine Learning, ICML}
\vskip 0.3in
]



\printAffiliationsAndNotice{\icmlEqualContribution} 

\begin{abstract}
	LLM agents, which often comprise parallel inference tasks, are commonly adopted to solve real-world problems. 
	When serving such task-parallel LLM agents in shared GPU servers, the scheduler is expected to attain fast agent completion with guaranteed worst-case performance.
	For that objective, our insight is to selectively pampering agents based on their completion order under idealized fair-sharing. 
	We design \textit{Justitia}, a fair and also efficient scheduler for task-parallel LLM agents. 
	Noticing that memory is prevalently a bottleneck in LLM serving, \textit{Justitia} quantifies the true agent cost in a memory-centric manner. 
	It also adopts a light-weight yet accurate method to predict agent costs.
        Finally, \textit{Justitia} adopts a virtual-time based fair queuing algorithm to reduce the overall performance with guaranteed worst-case delay. We have implemented \textit{Justitia} atop vLLM, and experimental results involving diverse agents show that it can substantially enhance the scheduling efficiency with fairness preserved.

\end{abstract}

\section{Introduction}

\label{sec:intro}


The recent years have witnessed the boom of large language models (LLMs)~\cite{naveed2023comprehensive} in revolutionizing various fields~\cite{thirunavukarasu2023large,gu2023llm,luo2025large}.
In particular, 
it has become increasingly popular to address realistic problems with \emph{LLM agents}, which usually contain a set of parallel inference tasks.
For example, to summarize a very large file, multiple parallel inferences would be launched, each processing one file partition~\cite{lin2024parrot}; besides, to solve a complex mathematical problem, multiple searching directions need to be explored concurrently, until a satisfiable answer is obtained~\cite{fu2024efficiently}.
Moreover, such LLM agents are prevalently served in \emph{shared GPU servers}, where users usually expect to get the final output as early as possible~\cite{lin2024parrot,tan2025towards}---without being remarkably delayed by competitors~\cite{vtc,fairserve}.

However, existing LLM serving systems fail to behave well in both efficiency and fairness for task-parallel LLM agents.
For example, mainstream LLM frameworks~\cite{vllm,zheng2024sglang} schedule LLM inference requests in a \emph{First-Come-First-Serve} (FCFS) manner, being inefficient due to \emph{head-of-line-blocking}.
Meanwhile, while prioritizing short jobs~\cite{qiu2024efficient,shahout2024don} can improve efficiency, it however incurs starvation. 
To ensure fairness, the VTC scheduler~\cite{vtc} is proposed to fairly allocate the serving resources among different users.
However, this method forces each agent to only use its fair share, resulting in postponed completion. 

To attain efficient and also fair serving, our insight is that LLM agents shall be served in a {saturated} manner following the {fair completion order}, which we call \emph{selective agent pampering}.
We first note that, regarding fairness, users primarily care about the \emph{long-term} fairness (i.e., with guaranteed  fast completion) instead of \emph{short-term} fairness (i.e., always allocated an equal share of the total resources). 
In that sense, as shown in Fig.~\ref{fig:toy_example}, when two agents are competing with each other, serving them---not \emph{concurrently with the fair share}, but \emph{sequentially in the fair completion order with all the resources}---can reduce the average completion time, with no agent actually delayed.
\begin{figure}
    \centering
    \subfloat[Fair Sharing]{
        \centering
        \includegraphics[width=0.35\linewidth]{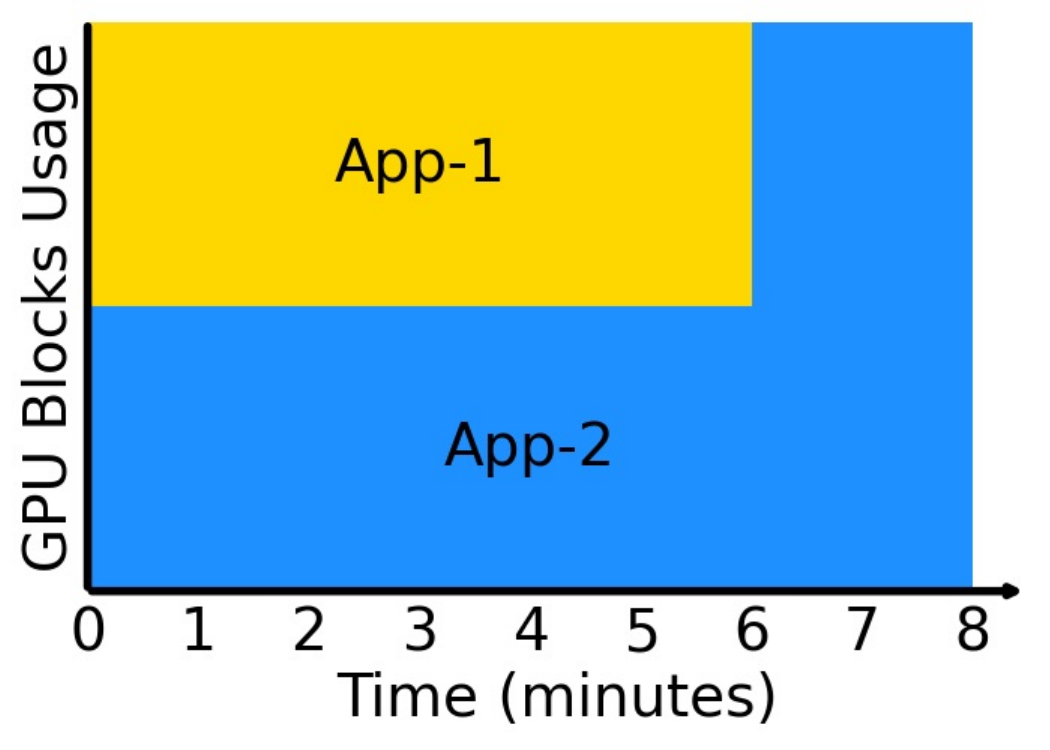}
    }\hspace{0.2in}
    \subfloat[Selective Pampering]{
        \centering
        \includegraphics[width=0.35\linewidth]{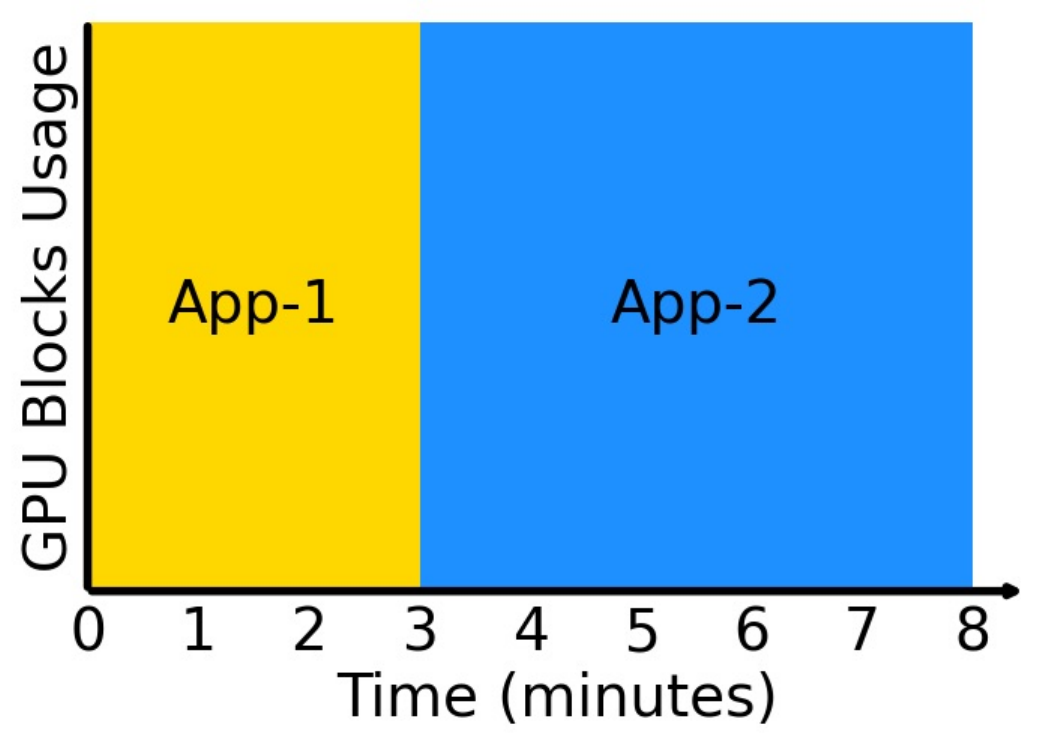}
    }
    \caption{Given two competing LLM agents, serving them sequentially following their fair completion order, or \emph{selective pampering}, can reduce the average completion time with no per-agent delay.} 
    \label{fig:toy_example}
    \vspace{-.1in}
\end{figure}

Yet, it is a non-trivial task to apply the above insight into practice, which requires 
estimating the service cost of LLM agents and 
also calculating their fair completion order.
Specifically, LLM inferences consume both compute and memory (i.e., KV cache) resources on GPU servers, rendering it difficult to precisely quantify an agent's true service cost.
Meanwhile, 
the overall service cost of an LLM agent needs to be predicted upon its arrival; such prediction shall be made accurate and also efficient.
Moreover, upon the arrival or completion of any agent, we need to work out the expected completion order of all the pending agents under a fair scheduler; such decisions shall be made efficiently (without frequent refreshing later) and also of high quality (yielding fast completion with worst-case guarantee).


In this paper, we propose \emph{\textit{Justitia}}, a fair and efficient scheduler for task-parallel LLM agents.
In modeling the service cost of an LLM agent, we find that the GPU memory is usually the true bottleneck; therefore, rather than adopting compute-centric cost modeling as in \cite{vtc}, we propose \emph{memory-centric cost modeling}, jointly considering both \emph{space} and \emph{time} of memory occupation.
Meanwhile, in \textit{Justitia} we adopt a Multi-Level Perception model for agent cost prediction, which is light-weight (in terms of the training and inference cost) and also accurate (by leveraging agent-specific demand stability). 
In determining the agent queuing order, we learn from the virtual-time based \emph{fair queuing} algorithm originally for network packet scheduling~\cite{demers1989analysis,parekh1993generalized}.
That algorithm can efficiently calculate the agent completion order under an idealized fair scheduler---in one shot with no need to consider later-arrival ones.
Moreover, we theoretically prove that, the 
maximal delay an agent could encounter under \textit{Justitia}---when compared to under the idealized fair scheduler---is always bounded by a constant.

We have implemented \textit{Justitia} above vLLM~\cite{vllm}, a mainstream LLM serving framework. 
Given a set of diverse LLM agents,  \textit{Justitia} can remarkably improve the average completion time without remarkably delaying any agent. 
Specifically, compare to the state-of-the-art fair scheduler, VTC, our \textit{Justitia} scheduler can reduce the average agent completion time by 57.5\%---with only 8\% agents delayed (their average delay scale is less than 10\%).
Moreover, further studies confirm that it is indeed indispensable to adopt memory-centric service cost modeling and MLP-based demand prediction for LLM agents; besides, the scheduling overhead of \textit{Justitia} is also negligible.
\section{Background}
\label{sec:background}


\subsection{Task-parallel LLM Agents} 
\label{subsec:basics} 

\begin{figure*}
		\centering
		\subfloat[MapReduce Summary]
		{
			\centering
			\includegraphics[height=0.19\linewidth]{./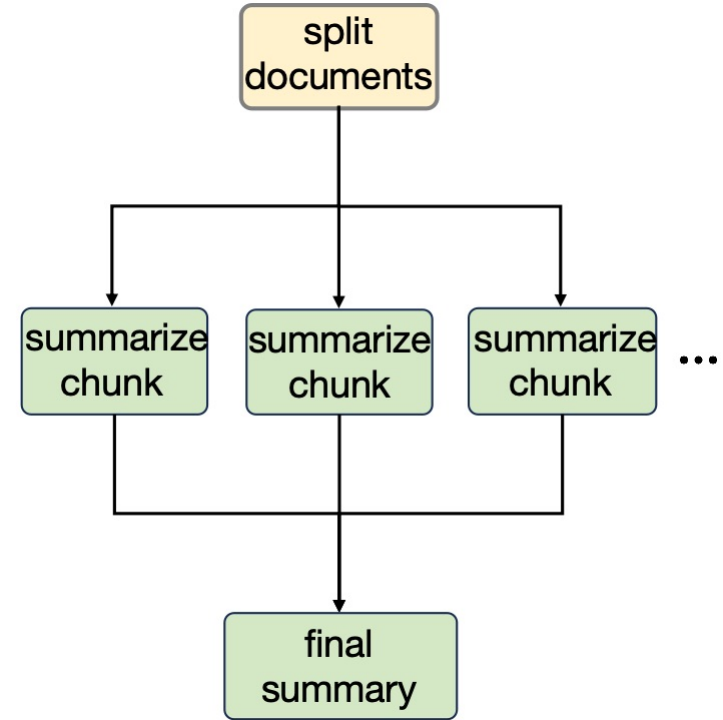}
			\label{fig:app:map_reduce}
		}\hfill
		\subfloat[Doc Merging]
		{
			\centering
			\includegraphics[height=0.19\linewidth]{./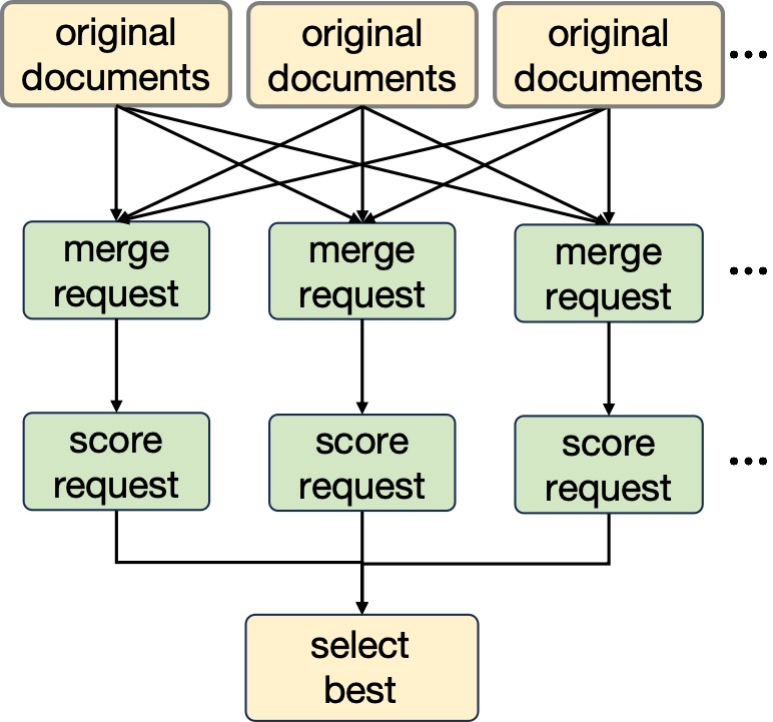}
			\label{fig:app:doc_merge}
		}\hfill
		\subfloat[Fact Verification]
		{	
			\centering
			\includegraphics[height=0.19\linewidth]{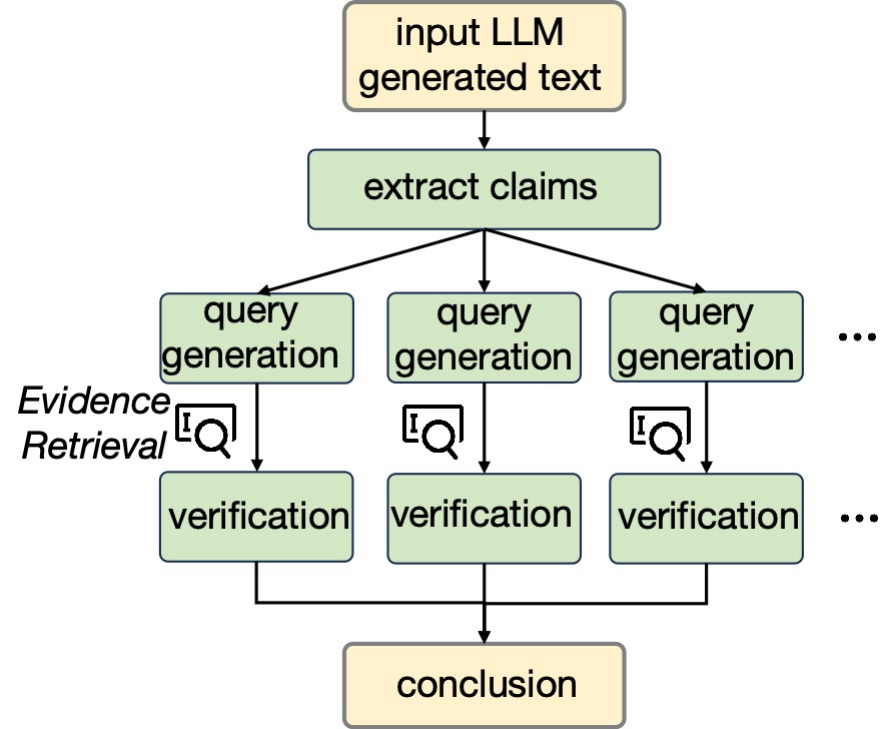}
			\label{fig:app:fact_verification}
		}\hfill
		\subfloat[Self Consistency]
		{	
			\centering
			\includegraphics[height=0.19\linewidth]{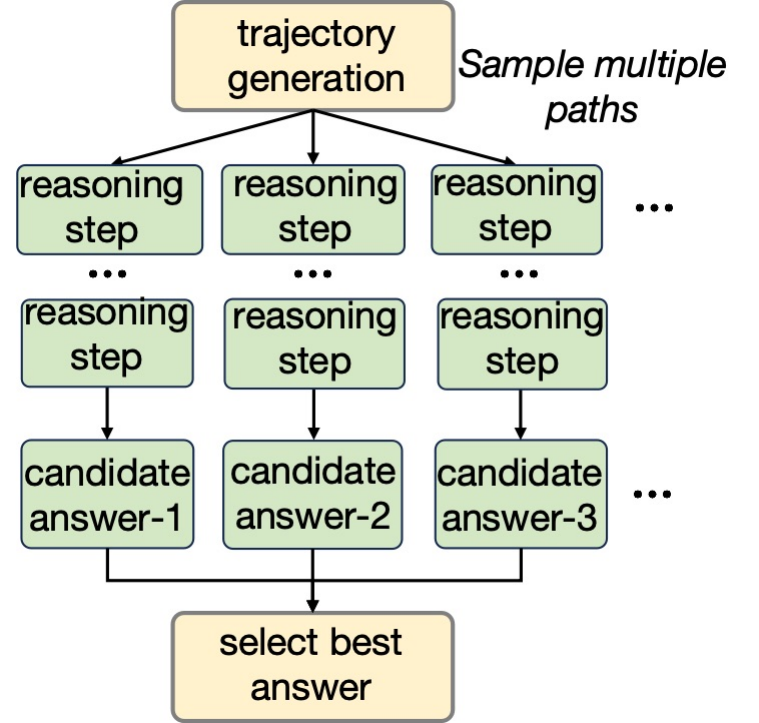}
			\label{fig:app:sc}
		}
                \vspace{.05in}
		\caption{Examples of task-parallel LLM agents.}
		\label{fig:llm_apps}
        \vspace{.05in}
\end{figure*}


\phm{The emergence of task-parallel LLM agents.}
Large language models (LLMs) have demonstrated strong capability for diverse tasks like text processing~\cite{wu2025survey}, code generation~\cite{gu2023llm} and workflow planning~\cite{kannan2024smart}. 
To apply LLMs with enhanced capability, there recently emerges a trend to have multiple LLM inferences jointly solve real-world problems---in the form of \emph{LLM agents}~\cite{lin2024parrot,luo2025large}.

For example, as shown in Fig.~\ref{fig:app:map_reduce}, since the context window size of an LLM inference is limited (e.g., 10M tokens~\cite{gemini}), to summarize a large text file, multiple parallel inferences---each processing a slice of the original file---need to be launched and finally have their results merged~\cite{lin2024parrot}. 
Similarly, in Fig.~\ref{fig:app:doc_merge}, to improve the merging quality of multiple documents, several merging inferences are submitted in parallel---each followed by a scoring inference~\cite{besta2024graph}.
In Fig.~\ref{fig:app:fact_verification}, to address the \emph{hallucination} problem~\cite{ji2023survey}, each claim within the LLM-generated output needs to be additionally verified by a dedicated LLM inference (e.g., with the \emph{FacTool} framework~\cite{chern2023factool}).
In Fig.~\ref{fig:app:sc}, to solve complex mathematical problems, the self-consistency (SC) agent~\cite{wang2022self} expands multiple reasoning trajectories and applyies majority voting to find the best answer.

Note that, each agent above comprises a set of \emph{parallel inference tasks}; we call such agents \emph{task-parallel LLM agents}.
Task-parallel LLM agents would become an increasingly common workload paradigm in future LLM serving clusters.
They are thus the focus of this paper. 

\phm{Scheduling objective for task-parallel LLM agents.}
For each LLM agent above, the final output desired by the the end-user is only available when all the inference tasks of that agent complete.
Therefore, when serving those LLM agents in a GPU cluster, it is of paramount significance to reduce their \emph{end-to-end execution time}~\cite{lin2024parrot}.
In the meantime, since different agents---usually submitted by different users---would compete for the limited processing capability on GPU servers,
it is also necessary to ensure service fairness among LLM agents, so as to avoid negative interferences~\cite{vtc} (e.g., \emph{head-of-line-blocking} or \emph{starvation}).
In summary, \emph{efficiency} and \emph{fairness} are two critical scheduling objectives for LLM agents. 





\subsection{Related Works and Their Limitations}
\label{sec:closely_related}
Regarding LLM inference serving, a series of scheduling methods have been proposed recently.
Here we revisit such scheduling practices and summarize their limitations.

Given that the output length of an LLM inference is non-deterministic, mainstream LLM serving frameworks like vLLM~\cite{vllm} and SGLang~\cite{zheng2024sglang} commonly adopt the \emph{First-Come-First-Serve} (FCFS) algorithm in request scheduling.
However, they suffer the head-of-line-blocking problem, meaning that a large task would impede the execution of those behind it.
Such a problem also appears for agent-level schedulers like Parrot~\cite{lin2024parrot} and Ayo~\cite{tan2025towards}, which also apply FCFS.

Given the inefficiency of FCFS schedulers, some other schedulers are designed to be \emph{efficiency-centric}. 
For example, to tackle head-of-line-blocking, FastServe~\cite{wu2023fast} employs a \emph{Multi-Level-Feedback-Queue} (MLFQ) for scheduling, yet it still compromises the end-to-end execution latency due to the periodical execution mode. 
Recently, some prediction-based methods propose to approximate \emph{Shortest-Job-First} (SJF) in scheduling~\cite{fu2024efficient,qiu2024efficient,shahout2024don}, yielding near-optimal efficiency.
However, such efficiency-centric schedulers may starve large agents, failing to provide service guarantee. 


To guarantee the worst-case performance, some \emph{fairness-centric} schedulers are also proposed.
For example, Sheng~\emph{et~al.} proposed a fair scheduler called \emph{Virtual Token Counter} (VTC)~\cite{vtc}, which tracks the services received for each tenant and prioritize the ones receiving the least services.
Another work, \emph{FairServe}~\cite{fairserve}, uses a similar scheduling method to avoid cross-tenant interference. 
However, by enforcing each task-parallel agent to only use its fair share, the end-to-end execution time---which users truly care about---would be delayed compared to the cases with monopolized resources. 

To summarize, existing LLM scheduling methods fail to attain efficient and fair scheduling at the agent level, compromising the user experience.
Our objective is thus to make up that research gap by designing a fair and also efficient scheduler for task-parallel LLM agents. 
On the one hand, we want to optimize agent-level scheduling efficiency by reducing the average completion time; on the other hand, we want to provide service guarantee on the worst-case service degradation any agent could experience compared to the case where it is allocated an equal resource share. 




\section{Motivation}
\label{sec:motivation}

\subsection{Insight} 
\label{subsec:insight}

\phm{Instantaneous fairness or finish-time fairness?}
While both efficiency and fairness are appealing objectives, they are however conflicting with each other.
We note that the fairness-centric schedulers~\cite{vtc,fairserve} are inefficient essentially because they seek to fairly share the resources at each instant moment, i.e., stick to \emph{instantaneous fairness}.
However, in expecting service fairness, users primarily desire a service guarantee on the worst-case completion time at the agent level.
In that sense, long-term fairness (i.e., guaranteeing agent completion time) would suffice for users; a typical fairness definition in that regard is \emph{finish-time fairness}~\cite{mahajan2020themis}, which relates to \emph{the ratio of the running time in a shared cluster with $N$ tenants to running alone in a $\frac{1}{N}$ cluster}.
We thus materialize out fairness objective as long-term finish-time fairness, which, as we show next, allows for more flexibility to balance between both sides.


\phm{Balance between efficiency and fairness with \emph{selective pampering}.}
For a set of task-parallel LLM agents, 
instantaneous fair sharing would limit the usable resources of each agent to only the average share.
Instead, we can prioritize the agent one by one based on \emph{their relative completion order under fair sharing}, allowing each prioritized agent to use unlimited resources it desires---we call this principle \emph{selective pampering}.
In this way, because a prioritized agent can complete faster, the average agent completion time can be reduced.
Meanwhile, this method can also behave well in the \emph{fairness} aspect:
since the pampered agents would yield their resources also earlier, and the fair-completion-order can prevent head-of-line blocking or infinite starvation, 
each agent may still finish no later than its expected completion time under fair sharing. 
In summary, with selective pampering we can do well in both efficiency and fairness.
		
To confirm, we conduct a testbed experiment as shown in Fig.~\ref{fig:testbed_example}. 
Note that to avoid repeated computations, the intermediate feature states of the generated tokens are cached for later usage during the inference process (called KV cache)~\cite{gao2025fast}, thus the KV cache space is a key resource contended on GPUs.
We submit two DocMerging agents simultaneously to an LLaMA2-7B model deployed on a single A100 GPU, with a total KV cache block number of 459.
As shown in Fig.~\ref{fig:fair}, under instantaneous fair sharing, each agent is restricted to use its fair share (as indicated by the KV block usage), resulting in an average \emph{job (agent) completion time}, or JCT, of 210 s.
However, if we prioritize the agents sequentially (based on the job completion order in Fig.~\ref{fig:sequential}), the average JCT drops to 166 s
---without delaying any single agent compared to the fair-sharing case in Fig.~\ref{fig:fair}.
This verifies the effectiveness of \emph{selective pampering}. 

\begin{figure}
    \centering
    \subfloat[Instantaneous Fair Sharing]{
        \label{fig:fair}
        \centering
        \includegraphics[width=0.42\linewidth]{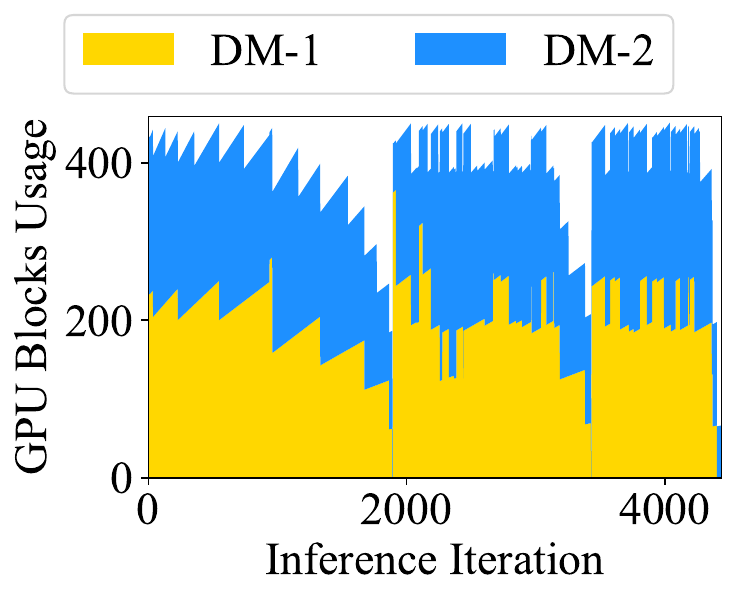}
    }
    \hfill
    \subfloat[Pampering in Fair Order]{
        \label{fig:sequential}
        \centering
        \includegraphics[width=0.42\linewidth]{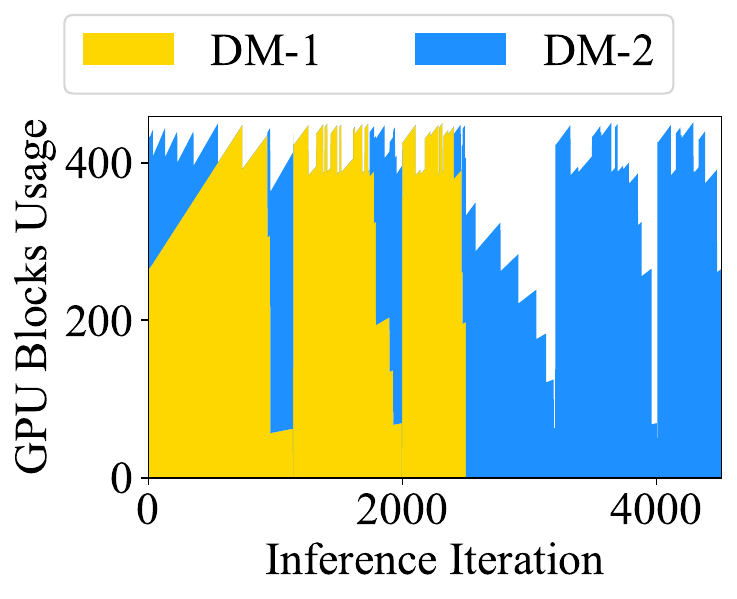}
    }
    \caption{KV block usage when running two DocMerging (DM) agents under different scheduling schemes.} 
    \label{fig:testbed_example}
\end{figure}

\subsection{Challenges}
\label{subsec:challenges}

To apply \emph{selective pampering}, we need to obtain the agent completion order prior to agent execution.
However, realizing it in practice confronts two key challenges.

First, it is difficult to quantitatively get the service cost of an LLM agent upon its arrival.
On the one hand, serving LLM workloads requires both compute and memory resources; simply using the compute amount (e.g., total output token length) often fails to quantify the true service cost of an LLM agent.
On the other hand, due to the auto-regressive nature of LLM inferences, the overall resource demands of an LLM agent are highly uncertain a priori.

Second, given the estimated agent-serving cost, it is difficult to determine the agent queuing order in an \emph{efficient} (in terms of the decision speed) yet also \emph{high-quality} (in terms of the ultimate performance) manner.
Enforcing \emph{selective pampering} requires calculating agent completion time under a fair scheduler, which is hard to make given the dynamic arrival of later agents. 
In particular, it is desirable to provide a theoretical guarantee depicting the worst-case impact of \emph{selective pampering} on the performance of any agent.
\section{Solution}
\label{sec:solution}
In this section, we present \emph{\textit{Justitia}}, a \emph{fair} and also \emph{efficient} scheduler for task-parallel LLM agents, which applies the \emph{selective pampering} principle with all the above challenges addressed. 
We first introduce our cost modeling method in Sec.~\ref{subsec:modeling}, and then elaborate how to predict the agent-level resource demand in Sec.~\ref{subsec:prediction}. 
Finally we describe the queuing strategy of \textit{Justitia} in Sec.~\ref{subsec:queuing}. 


\subsection{Memory-Centric Cost Modeling}
\label{subsec:modeling}
To determine the agent scheduling order, we need to quantify the overall serving cost of each LLM agent, and the key is to describe the serving cost of an LLM inference. 


\begin{figure}
	\centering
	\includegraphics[width=0.14\textwidth]{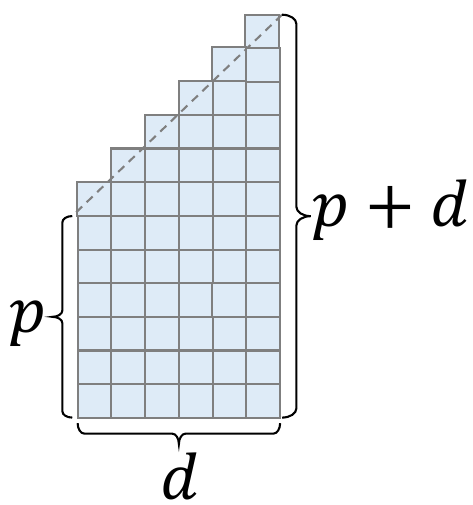} 
	\caption{Demand modeling of an LLM inference. $p$ and $d$ are respectively the prefill and decoding token length. The service cost is then depicted as the accumulative KV cache occupation (i.e., KV token-time): $pd +\frac{d^2}{2}$.} 
    \label{fig:quantification_example}
    \vspace{-.1in}
\end{figure}

Given that both computation and memory resources are demanded in LLM inference process, its serving cost can be captured from either the \emph{computation} perspective or the \emph{memory} perspective. 
The VTC work~\cite{vtc} chooses the former:
it measures the serving cost as a weighted combination of the prefilling (input) tokens and decoding (output) tokens.
However, such a method is over-simplified by 
ignoring the impact of KV cache consumptions.
In fact, in modern LLM serving frameworks like vLLM~\cite{vllm}, the inference throughput is bounded commonly on the GPU memory (\emph{KV cache space}): new sequences can always be added to the \emph{running} queue as long as there are sufficient KV cache space; otherwise some running sequences would be interrupted and moved to the \emph{swapped} queue.
In that sense, inference serving cost shall be better depicted from the \emph{memory} perspective. 

Moreover, the resource demand of an LLM inference is not fixed during its lifetime. 
During the generative inference process, the sequence length keeps increasing, leading to increasing KV-cache occupations. 
Therefore, in this paper we propose a \emph{memory-centric} cost modeling method that depicts the cumulative service cost of an LLM inference in both temporal and spatial dimensions. 
To be specific, let $p$ and $d$ respectively represent the prefill and decode token length, then we devise the cost metric, \emph{KV token-time}, as:
\begin{equation}
    c=\sum_{i=1}^{d}(p+i)=pd +\frac{d^2}{2}.
\end{equation}
This formula adds up the KV cache occupation\footnote{
The unit of the KV cache occupation is the number of KV cache blocks corresponded to one token (over all the LLM layers and heads), which is fixed for a given LLM.
This is more concise for analysis than recording the exact KV block number.
We describe the total KV cache space also in such units.
} 
over all the iterations\footnote{
For simplicity we do not consider the time difference when generating different tokens. 
In realistic execution, an LLM sequence would be batched with sequences from other inference requests~\cite{yu2022orca}; all such sequences jointly determine the per-iteration inference latency.
The inference time of such runtime batches with mixed sequence is statistically stable.
} of that inference.  
From the above formula, we can learn that the relationship between the cost volume and the generation length ($d$) is indeed \emph{quadratic}. 
This is more reasonable compared to the \emph{linear} relationship in the cost model of VTC~\cite{vtc} given the inflation effect of $d$ on both memory occupation size and duration.

Furthermore, the overall serving cost of an LLM agent can be naturally defined as the sum of the KV token-time of all its constituting inferences. 
Such a memory-centric modeling method can express the true serving cost of LLM agents.

\subsection{MLP-based Demand Prediction}
\label{subsec:prediction}
\begin{figure}[]
	\centering
	\includegraphics[width=0.45\textwidth]{./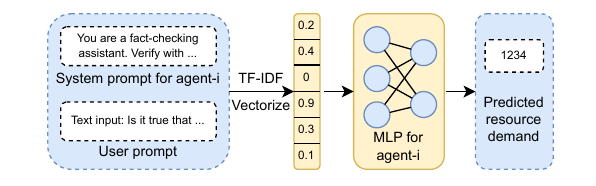} 
	\caption{Service cost prediction for agents with an MLP model.} 
	\vspace{-0.1in}
\label{fig:predict}
\end{figure}


Given the above cost modeling method, to support our previous insight of \emph{selective pampering}, we need to predict the cost volume of each agent. 
There are already some attempts~\cite{zheng2024response,s3} for demand prediction of LLM workloads. 
For example, the $S^3$~\cite{s3} method proposes to fine-tune a language model, \emph{Distillbert}, to predict the output length range given the inference prompt. 
However, simply adopting such a method for agent cost prediction is inappropriate in efficiency and accuracy.
First, fine-tuning the Distillbert model (with 66 million parameters) is time-consuming and also requires collecting a large number of agent execution samples, which is hard to obtain in practice.
Meanwhile, using the Distillbert model for prediction---which is essentially yet another LLM inference---would \emph{incur non-negligible runtime overhead}.
Moreover, the $S^3$-like method uses a single model to predict all the workloads, yet different agents may exhibit heterogeneous cost distribution patterns, rendering single-model prediction \emph{inaccurate}.

Therefore, in this paper we seek to devise a \emph{light-weight} and also \emph{sufficiently-accurate} cost prediction method for LLM agents.
For \emph{high accuracy}, we respectively maintain a prediction model for each agent: our measurements show that the resource demand of an agent is relatively stable across different trial runs (due to space limitation, we put it to Appendix~\ref{appendix:agent_demand_stability}); in that sense, the agent type can play as a valuable \emph{prior knowledge}.
Meanwhile, for \emph{low overhead}, for each agent type we maintain a \emph{Multi-Layer Perceptron} (MLP) model to predict the its overall service cost based on the agent input.
Since such MLP models have a simple structure, they can be efficiently trained even with limited historical data; meanwhile, an MLP model requires minimal computational resources to make predictions, suitable for real-time scheduling tasks. 

Fig.~\ref{fig:predict} elaborates the workflow of our MLP-based cost prediction method.
For the runtime input prompt, we first perform vectorization using \emph{Term Frequency-Inverse Document Frequency} (TF-IDF)~\cite{sparck1972statistical}. 
TF-IDF is a lightweight and efficient method for converting text into numerical vectors, focusing on word importance rather than deep semantic analysis. 
It's ideal for quick processing with minimal overhead. 
Then the vectorized input will be passed into the agent-specific, 4-layer MLP to get the agent cost. 
The number of neurons in the first layer is proportional to the average agent input size.
The training is conducted on 100 samples per agent, optimized via gradient descent with Mean Squared Error (with L2 regularization).
Experimental results later in Sec.~\ref{subsec:ablation} demonstrate that this method achieves high accuracy with low overhead.

\subsection{Fair and Efficient Agent Queuing}
\label{subsec:queuing}

In this part we elaborate the key queuing algorithm of \textit{Justitia}. 
In \textit{Justitia}, each inference is queued based on the overall demand of the LLM agent it belongs to, so that all the inferences of a high-priority agent can be served consecutively without being interleaved. 
In particular, considering the preemption overhead (e.g., KV cache swapping), we follow the original non-preemptive scheduling principle\footnote{
To avoid inference interference, vLLM is designed to be non-preemptive~\cite{vllm}: if the KV cache space is used up, the running inference would be placed into the swapped queue, which has a higher priority; when the KV cache space is available again, the agents in the swapped queue (if any) would be served first. 
In that sense (with the swapped queue viewed as part of the running queue), 
any running inference would never be preempted by those pending requests in the waiting queue.
} in vLLM~\cite{vllm}: any pending request in the waiting queue---regardless of its priority---cannot preempt a running inference; that is, agent-level preemption only occurs after the completion of an entire inference. 

Recall that our insight is to conduct saturated agent serving following the fair completion order; the key is to acquire that fair completion order efficiently---at agent arrival time.
One choice is to use the VTC scheduler~\cite{vtc} as the reference system.
Yet, idealized fair sharing requires each agent stick to its fair share, whereas an inference scheduled in VTC can take an arbitrarily large KV cache space as needed to accommodate its prompt, disrupting the expected execution status.
In that sense, getting the VTC completion order requires a trial run (or simulation) based on the full agent demand knowledge (including those arriving later).
Moreover, estimating the fair completion order requires continuously refreshing---upon the arrival or completion of any agent---the remaining resource demand and the latest resource fair share, which is cumbersome.

Fortunately, 
we find that the problem we face now is similar to the packet scheduling problem in the networking field.
In network scheduling, the network port can only send one packet at a time (which is non-preemptable), yet it is expected that different packets can fair share the network resource at each instant (to attain flow-level fairness). 
In that case, the packet scheduler shall deliberately select the next packet to transmit, with the objective to minimize the delay any packet may encounter compared to the fair-sharing case. 

The classical solution to the packet scheduling problem is \emph{fair queuing}~\cite{demers1989analysis,parekh1993generalized}.
In principle, the fair queuing algorithm also use the completion order of different packets under idealized fair sharing as their scheduling priorities.
That idealized fair sharing scheme is called \emph{Generalized Processor Sharing} (GPS), where the backend resources are arbitrarily divisible and equally allocated to each active tenant.
As our problem, it is hard to estimate the completion time in GPS at packet arrival time---without knowing the runtime packet contention status; to get the relative scheduling order in one shot at packet arrival time, the fair queuing algorithm employs a notion called \emph{virtual time}, which seeks to adjust the time elapsing rate instead of each agent's expected fair completion time when the agent contention state changes.
We thus borrow such virtual-time based fair queuing method for scheduling LLM agents.

\begin{figure}[]
\includegraphics[width=0.4\textwidth]{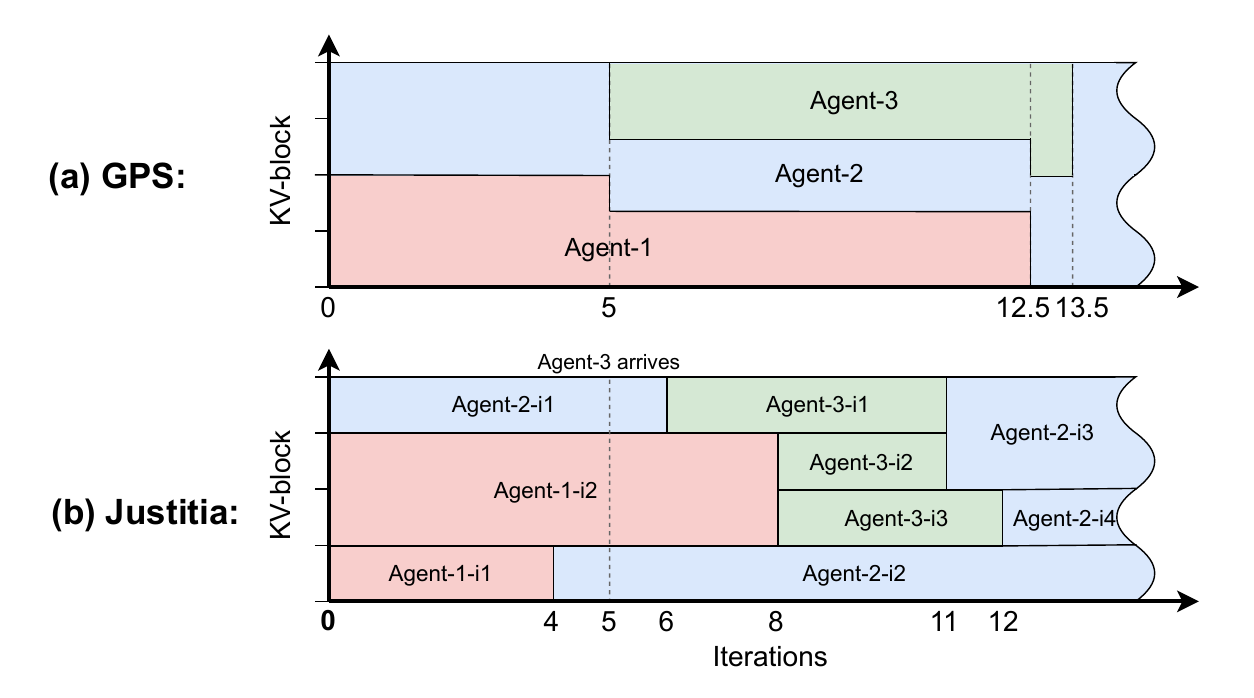}
\vspace{-.1in}
\caption{In \textit{Justitia}, agents are prioritized according to their relative completion order under GPS. 
} 
\label{fig:queuing_strategy}
\vspace{-.1in}
\end{figure}

Fig.~\ref{fig:queuing_strategy} presents the key queuing mechanism of our proposed \textit{Justitia} scheduler (for simplicity we ignore demand volume inflation during inference).
For LLM agent scheduling, we also use the GPS scheme to gauge the fair completion order (i.e., the scheduling priority) of LLM agents.
As shown in Fig.~\ref{fig:queuing_strategy}, suppose the agent completion order under the idealized fair-sharing system, GPS, is Agent-1, Agent-3, and Agent-2, then we set that order as their scheduling priority.
In that sense, when Agent-2-i1 (the first inference request of Agent-2) completes, the requests from Agent-3 would take the service opportunity.
Specifically, for efficient completion order estimation, we adopt the notion of virtual time $V (t)$, which is defined as a function of the real time $t$:
\begin{equation}
  \label{eq:vtime}
  \begin{split}
    V(0) & = 0,\\
    \textstyle
    \frac{\mathrm{d}}{\mathrm{d} t} V(t) & = M / N_t.
  \end{split}
\end{equation}
Here, $M$ is the total KV cache space, and $N_t$ is the number of active agents in GPS at time $t$; $M / N_t$ thus represents the instantaneous fair share.
Further, $V(t)$ increases at the marginal rate at which each agent receives service in GPS.
When Agent-$j$ arrives at time $a_j$ (with ${C}_j$ be its KV token-time cost), \textit{Justitia} calculates its \emph{virtual finish time} $F_j$---the time at which the agent would complete in GPS---as:
\begin{equation}
  \label{eq:v_fin_time}
  \bar{f}_j = V(a_j) + {C}_j.
\end{equation}
The virtual finish time of an agent, once calculated, requires no update in the future.
While the arrival of later agents would change the fair-serving rate, they do not change the relative completion order among existing agents (i.e., the relative order in $\{F_j\}$), because each active agent would always be serviced with the same rate.
\textit{Justitia} further adopts the $\{F_j\}$ order as the scheduling priority.
In this way we can attain low scheduling overhead: the status refreshing overhead on agent arrival or completion is \emph{constant}, and the complexity to select the next agent to serve is $O(\text{log }N_t)$. 

Moreover, we have mathematically proved that, under \textit{Justitia} an agent can be guaranteed to complete within a constant delay after its completion under GPS (the virtual fair scheduler). 
Due to the space limitation, please refer to Appendix~\ref{subsec:analysis} for the detailed theorem and proof.

\section{Evaluation}
\label{sec:eval}


\subsection{Setup}
\label{subsec:setup}

\begin{figure*}
    \centering
    \subfloat[LLaMA-7B]{
        \label{fig:e2e_7B}
        \centering
        \includegraphics[width=0.31\linewidth]{./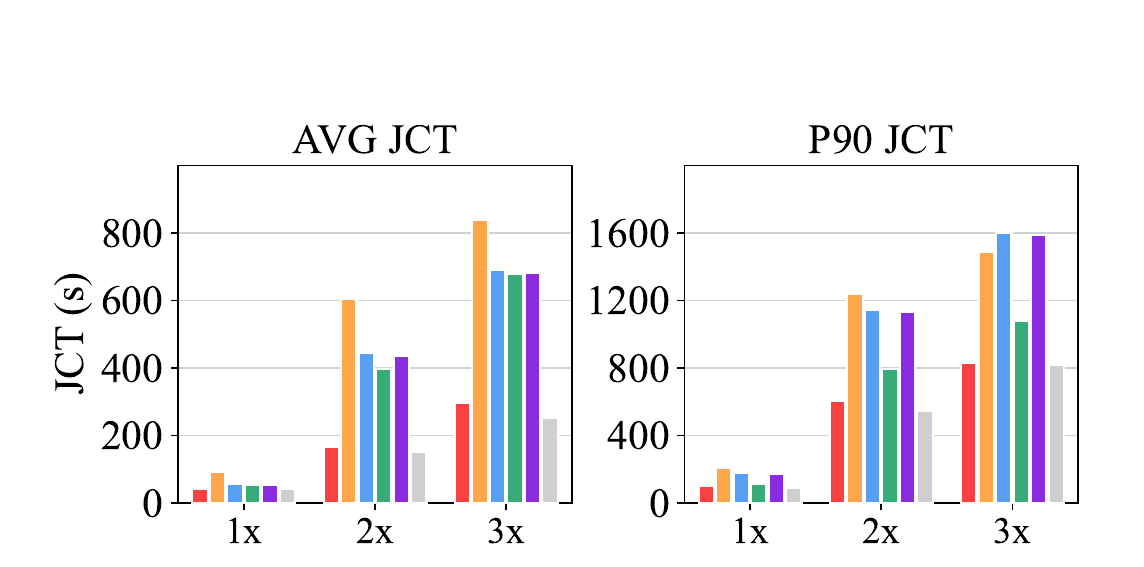}
    }
    \hfill
    \subfloat[LLaMA-13B]{
        \label{fig:e2e_13B}
        \centering
        \includegraphics[width=0.31\linewidth]{./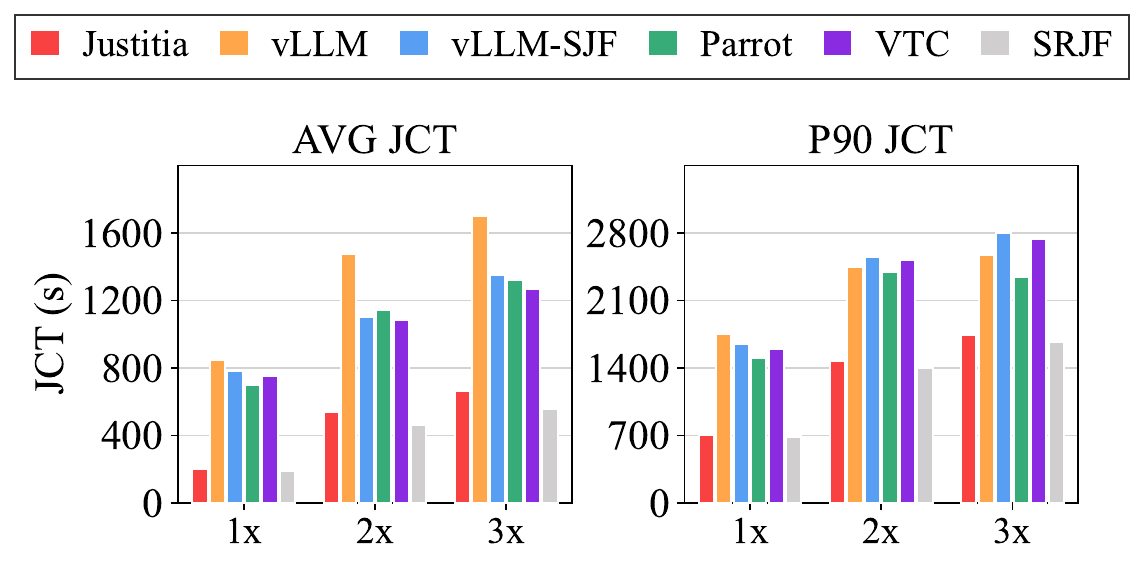}
    }
    \hfill
    \subfloat[Qwen2.5-32B]{
        \label{fig:e2e_32B}
        \centering
        \includegraphics[width=0.31\linewidth]{./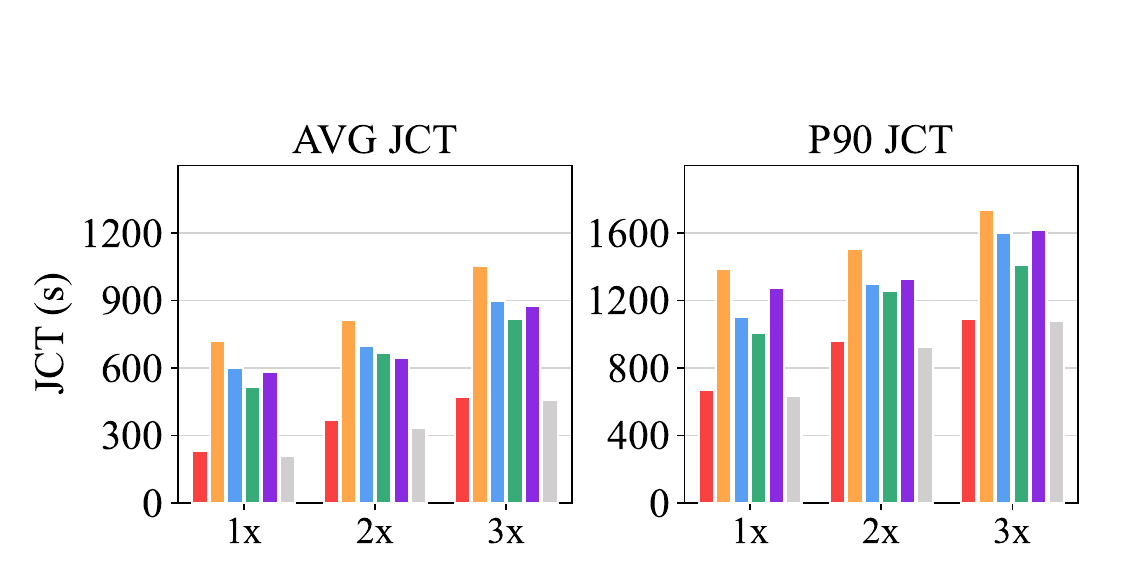}
    }
    \caption{JCT performance with different LLM backends.} 
    \label{fig:e2e_eff}
\end{figure*}



\noindent\textbf{Hardware Platform.} 
We have implemented \textit{Justitia} atop vLLM~\cite{vllm}, the mainstream LLM serving framework.
Due to the space limitation, the implementation details are placed in Appendix~\ref{sec:implementation}.
In testbed experiments, we evaluate \textit{Justitia} with diverse LLMs. 
The LLMs we use are LLaMA-7B (served on an A100-PCIe-40GB GPU), LLaMA-13B~\cite{llama2} (served on four V100-PCIe-16GB GPUs in tensor-parallel mode), and Qwen2.5-32B~\cite{qwen3_32b} (served on an H800-PCIe-80GB GPU).

\phm{Workloads.}
For our experiment, we created a mixed workload suite with 300 LLM agents, each with distinct inputs from the original datasets. 
To be specific, we include the following agent classes:  (a) MapReduce Summarization (MRS)~\cite{Langchain_MapReduce}, (b) Plan-and-Execution (PE)~\cite{shen2023hugginggpt}, (c) Code Checking (CC)~\cite{chern2023factool}, (d) Knowledge-Based-Query-Answering Verification (KBQAV)~\cite{chern2023factool},  (e) Equation Verification (EV)~\cite{chern2023factool}, (f) Fact Verification (FV)~\cite{yao2023react}, (g) ALFWorld Interaction (ALFWI)~\cite{yao2023react}. (h) Document Merging (DM)~\cite{besta2024graph} and (i) Self Consistency (SC)~\cite{wang2022self}.
Similar to prior work~\cite{qiao2021pollux,jayaram2023sia}, we set the sampling probability of small (EV, FV, CC, ALFWI and KBQAV---usually less than 1 min), medium (CG, PE and SC---usually between 1 and 10 min) and large (DM and MRS---usually longer than 10 min) agents to be 72\%, 26\%, and 2\%, respectively. 
Regarding agent submission, we follow the request arrival time in the production traces released by Mooncake~\cite{qin2024mooncake}, with the submission window respectively set to 6, 9, 18 mins (i.e., with workload intensity respectively be 3$\times$, 2$\times$ and 1$\times$).

\begin{figure}
	\centering
	\includegraphics[width=0.45\textwidth]{./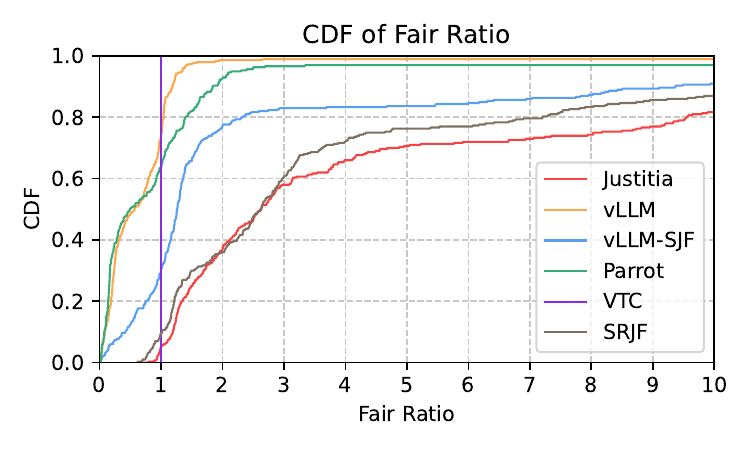}
	\caption{CDF of agents' finish-time fair ratios (i.e., realistic JCT normalized by VTC-JCT) under 3$\times$ density.}
    \label{fig:overall_fair}
\end{figure}


\phm{Baselines.} 
We compare \textit{Justitia} with five baselines: (a) vLLM~\cite{vllm}, which adopts FCFS policy at the inference level; 
(b) vLLM-SJF~\cite{shahout2024don}, which schedules LLM inferences following SJF policy with the Distillbert-predicted inference durations;
(c) Parrot~\cite{lin2024parrot}, which adopts the FCFS policy at the agent level; (d) VTC~\cite{vtc}, which seeks to approximate the instantaneous fair-sharing policy at the agent level; (e) SRJF, which uses our predicted serving cost to enforce the shortest-remaining-job-first policy at the \emph{agent} level.

\phm{Metrics.}
\textit{Justitia} is expected to behave well in both efficiency and fairness. 
Regarding efficiency, we adopt the average and P90 job completion time (JCT, meaning the duration between job arrival and completion); here a \emph{job} refers to a running agent triggered by a given user input. 
Regarding fairness, we adopt the notion of \emph{finish-time fair ratio}, which is the relative ratio between a job's realistic completion time and its completion time under a fair scheduler (we use VTC as the baseline).
The higher that ratio, the better the fairness level.





\subsection{End-to-end Scheduling Performance}
\label{subsec:overall}

\phm{Efficiency performance.}
We first evaluate the efficiency performance of \textit{Justitia}.
Fig.~\ref{fig:e2e_eff} shows the overall serving performance with different models.
From the two figures, \textit{Justitia} can substantially outperform the mainstream schedulers. 
Specifically, the average JCT under \textit{Justitia} is 57.5\% (61.1\%) better than that under VTC (Parrot).
In the meantime, \textit{Justitia} attains a very close JCT performance with SRJF,
indicating that it can attain near-optimal efficiency.

\phm{Fairness performance.}
Regarding the fairness performance, we further depict the \emph{cumulative distribution function} (CDF) of all the agents' finish-time fair ratio, with the results shown in Fig.~\ref{fig:overall_fair}.
It shows that 92\% agents can complete under \textit{Justitia} no later than it would have under VTC (with the worst-case delay be 26.0\%, which is much smaller than others).
This confirms the fairness superiority of \textit{Justitia}.
Interestingly, we find that SRJF can also attain a relatively good fairness performance---because it can avoid head-of-line blocking and, many prioritized agents cannot saturate the service backend---thus the remaining KV resources are multiplexed by others, exhibiting a modest fair-sharing effect.
Yet, SRJF may starve the large agents, which we next verify with a micro-benchmark experiment.

\subsection{Micro-benchmark Experiments}
\label{subsec:eval_micro}

\phm{Starvation avoidance under \textit{Justitia}.}
A well-known deficiency of SRJF is the starvation problem: when short agents keep arriving, the execution of long agents may be infinitely delayed. 
To confirm that, we conduct a micro-benchmark experiment. 
To be specific, we first submit a large ``elephant'' agent---MapReduce Summarization, and then keep submitting a set of ``mice'' agents---randomly from KBQAV, CC, ALFWI---once per second.
In Fig.~\ref{fig:micro_starvation}, we show the relationship between the JCT of the elephant agent and the number of ``mice'' agents respectively under SRJF and \textit{Justitia}. 
It clearly demonstrates that, with an increasing number of mice agents, the elephant agent may potentially be delayed forever under SRJF; in contrast, that delay under \textit{Justitia} is bounded regardless of the number of competing agents. 

\phm{Performance robustness against prediction errors.}
In reality, it is still possible that the predicted cost of an LLM agent is highly inaccurate. 
To check the performance robustness of \textit{Justitia} against prediction errors, we resort to another experiment with controlled prediction errors.
Given the profiled cost ground-truth (\emph{temperature} set to 0 to ensure recurrence), we scale it with randomly factors at different levels. 
As in Fig.~\ref{fig:error_robustness}, $\lambda\times$ means that the original cost information is scaled by a random factor in [$1/\lambda$, $\lambda$] before being used by \textit{Justitia} (when $\lambda$ is 1 we directly use the ground-truth).
Fig.~\ref{fig:error_robustness} shows that the performance of \emph{Justitia} is highly robust: compared with the ideal case ($\lambda=1$), the average JCT is inflated by only $9.5\%$ when $\lambda$ is set to $3$.

\subsection{Ablation Study}
\label{subsec:ablation}
In Sec.~\ref{subsec:modeling} and Sec.~\ref{subsec:prediction} we have respectively adopted \emph{memory-centric cost modeling} and \emph{MLP-based demand prediction}; here we check their effectiveness.

\begin{figure}
	\begin{minipage}[t]{0.45\linewidth}
	\centering
    	\includegraphics[width=0.9\linewidth]{./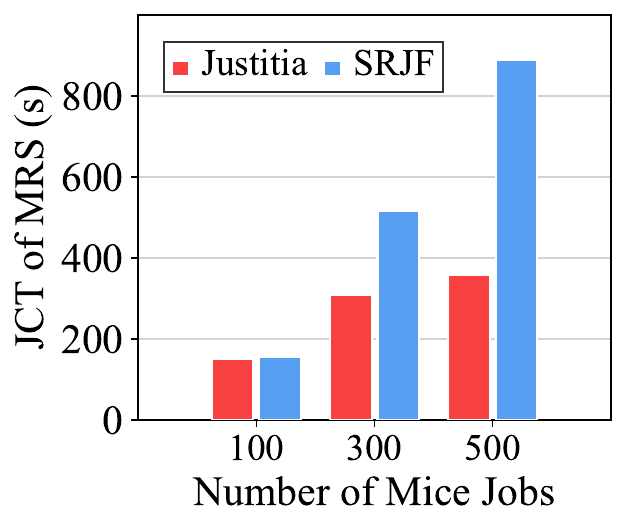}
    	\caption{Behaviors when serving a large agent with many small ones.}
    	\label{fig:micro_starvation}
	\end{minipage}%
	\hfill
	\begin{minipage}[t]{0.45\linewidth}
        	\includegraphics[width=0.9\linewidth]{./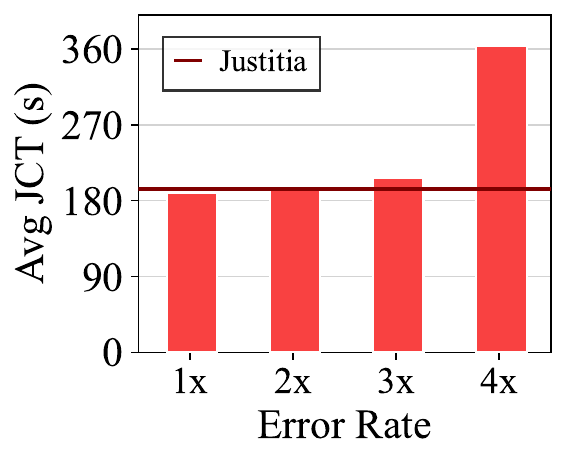}
           \caption{Justitia performance when prediction error amplifies.} 
    \label{fig:error_robustness}
	\end{minipage}
    \hfill
	\begin{minipage}[t]{0.45\linewidth}
    \includegraphics[width=0.9\linewidth]{./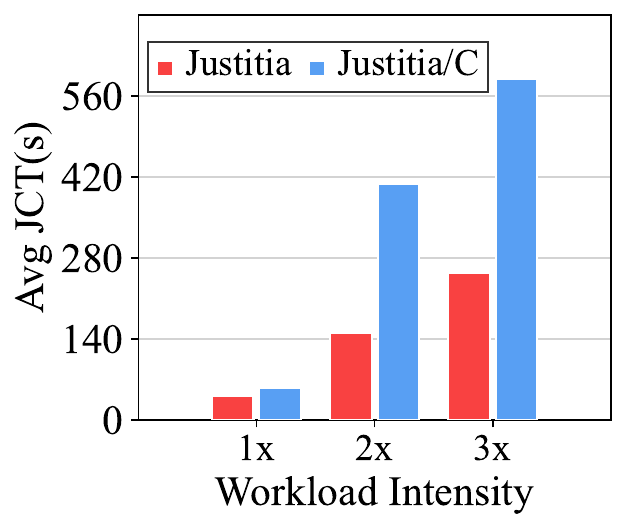}
           \caption{Performance comparison between different cost modeling methods.} 
    \label{fig:compare_compute-centric}
	\end{minipage}
    \hfill
	\begin{minipage}[t]{0.45\linewidth}
            \includegraphics[width=0.9\linewidth]{./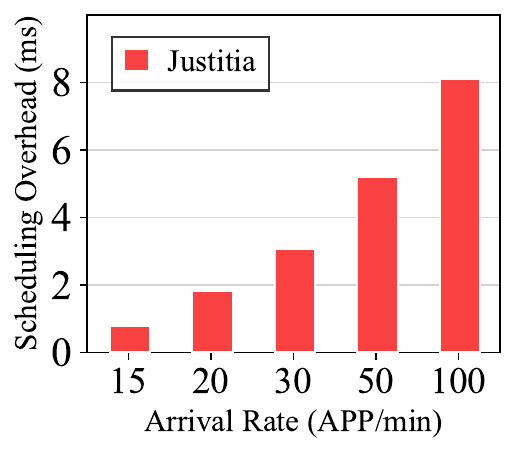}
                       \caption{Justitia scheduling delay at different request arrival rates.}
                \label{fig:scalability}
	\end{minipage}
\end{figure}


\phm{Necessity of memory-centric cost modeling.}
To check the necessity of memory-centric cost modeling, we introduce a variant of \textit{Justitia} by replacing its cost model with that of VTC (i.e., $p+2d$~\cite{vtc}), which we call \textit{Justitia}/C. 
We repeat the experiment in Fig.~\ref{fig:e2e_7B}, and compare the average and P90 JCT under \textit{Justitia}/C with the vanilla \textit{Justitia}. 
As shown in Fig.~\ref{fig:compare_compute-centric}, compute-centric cost modeling would incur a JCT performance degradation of up to 42.3\%, which confirms the necessity to adopt memory-centric cost modeling.

\phm{Effectiveness of MLP-based demand prediction method.}
To check the superiority of MLP-based demand prediction, we introduce another \textit{Justitia} variant: \textit{Justitia}-S3, which uses a Distillbert model for cost prediction in \textit{Justitia}.
In \textit{Justitia}-S3, the Distillbert model is trained on the same dataset as the MLP-based models.
In Table~\ref{table:pred} we compare the two methods in multiple aspects: average prediction error (prediction gap normalized by the ground-truth), average prediction overhead, average JCT, as well as the model training time.
Table~\ref{table:pred} suggests that our MLP-based method can remarkably outperform the distillbert-based method. 



\begin{table}
\caption{Performance comparison between MLP and Distillbert in prediction (under 2$\times$ workload density). MLP and Distillbert model training is conducted on a V100 GPU.} 
\label{table:pred}
\centering
\small
\begin{footnotesize}
\setlength{\tabcolsep}{4pt}  
\begin{tabular}{@{}>{\centering}p{1.5cm}cccc@{}}
\toprule
\textbf{\makecell{Prediction \\Model}} & \textbf{\makecell{Average \\ Relative \\ Error (\%)}} & \textbf{\makecell{Average \\ Inference\\ Overhead (ms)}} & \textbf{\makecell{Average \\JCT (s)}} & \textbf{\makecell{Training\\ Time}}\\ \midrule
MLP & 53.0 & 2.16 & 151.1 & $\sim$1 min\\
Distillbert & 452 & 55.7 & 366.7 & $\sim$2 h\\ \bottomrule
\end{tabular}
\end{footnotesize}
\end{table}


\subsection{Overhead Analysis}




The overhead in \textit{Justitia} stems from two sources: (1) the one-shot prediction performed when an agent arrives, and (2) the updating of system virtual time and the selection of highest-priority agent upon the arrival or completion of any agent. 
As can be seen also in Table~\ref{table:pred}, the prediction overhead for a single new agent is approximately 2.16 ms, which is a negligible cost. 
Fig.~\ref{fig:scalability} further demonstrates the average scheduling latency of \textit{Justitia} under varying arrival rates (reflecting different scheduling scales). 
The results confirm that the scheduling overhead remains consistently low (under 10 ms) in all scenarios.









\section{Conclusion}
\label{sec:conlucsion}


In this paper, we propose \textit{Justitia}, an efficient and also fair scheduler for task-parallel LLM agents. 
\textit{Justitia} works by scheduling LLM agents in a selective pampering manner, following their fair completion order. 
Specifically, it quantifies the true service cost of LLM agents in a memory-centric manner, and adopts a MLP-based method to predict the agent service cost a priori. 
Moreover, \textit{Justitia} adopts the fair queuing method to efficiently get the expected agent completion order under an idealized fair scheduler.
Testbed measurements with diverse workloads confirm that \textit{Justitia} can behave well in both fairness and efficiency. 

\begingroup
\raggedright
\bibliography{main}
\bibliographystyle{icml2026}
\endgroup

\newpage
\appendix
\onecolumn

\section{Agent-specific Demand Stability}
\label{appendix:agent_demand_stability}


In this section, we elaborate that the resource demands of an LLM agent is relatively stable across different trial runs (each with a distinct agent input). 
In fact, since each LLM agent is designed for specific use cases, the resource demands of which are relatively stable across different execution runs. 
Fig.~\ref{fig:app_demand} shows the execution information (input/output length) of specific inference stages in the two LLM agent over 100 trial runs, which exhibits strong demand correlations.  
For example, \texttt{generate-queries} inferences in Fact-Verification agent all have an input token length between 360 and 380. 

\begin{figure}[!th]
\centering
\setlength{\tabcolsep}{0pt} 
\begin{tabular}{cc}
	\parbox[b]{0.28\linewidth}{\centering
		\includegraphics[width=\linewidth]{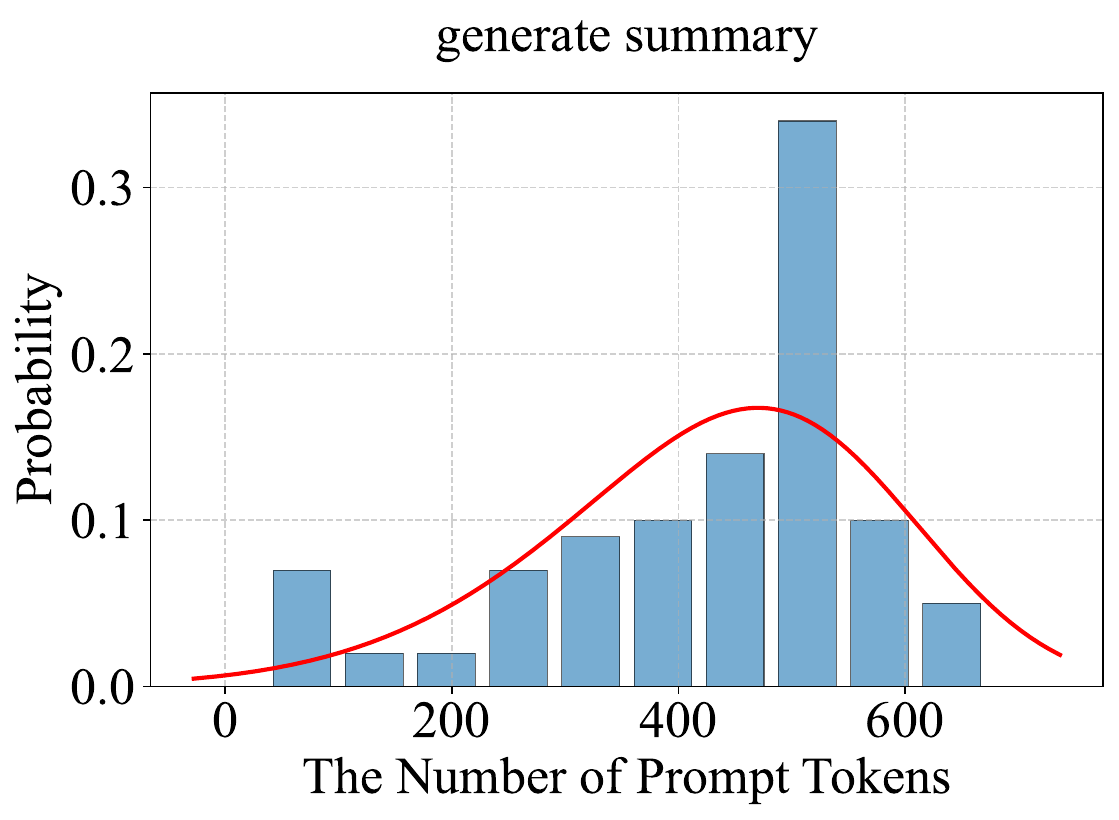}
	} & 
	\parbox[b]{0.28\linewidth}{\centering
		\includegraphics[width=\linewidth]{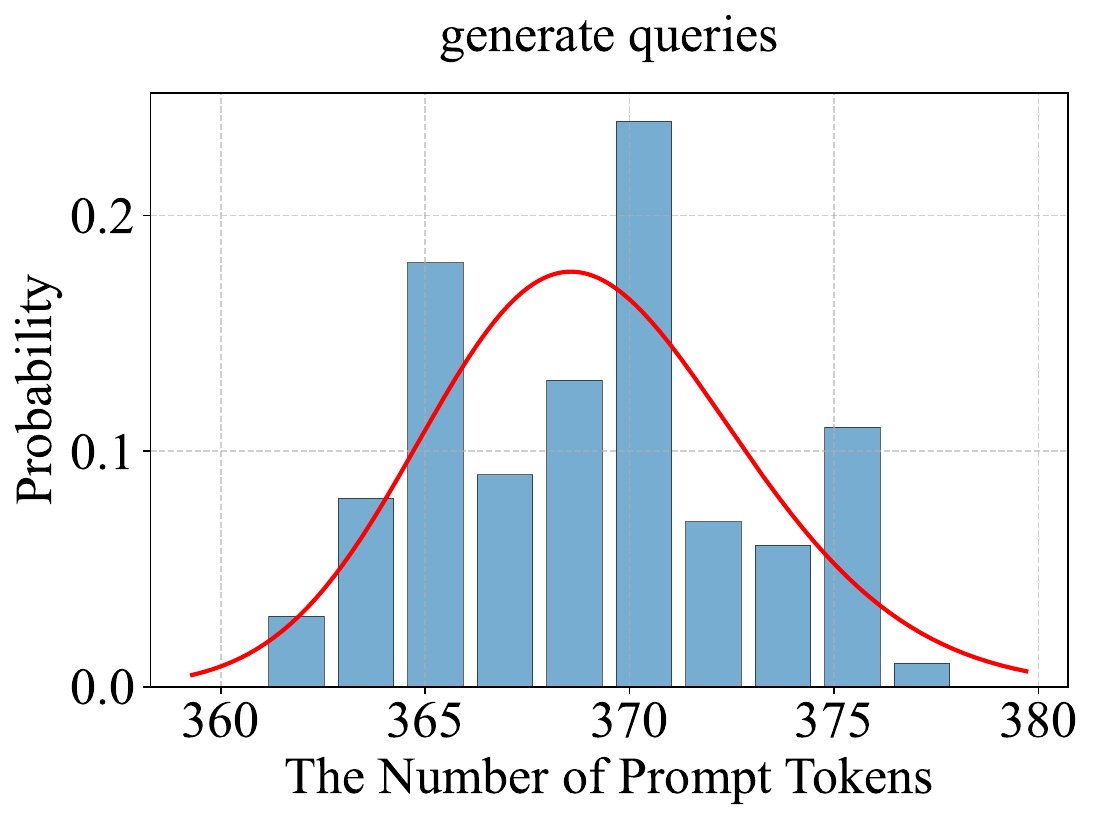}
	} \\
	\parbox[b]{0.28\linewidth}{\centering
		\includegraphics[width=\linewidth]{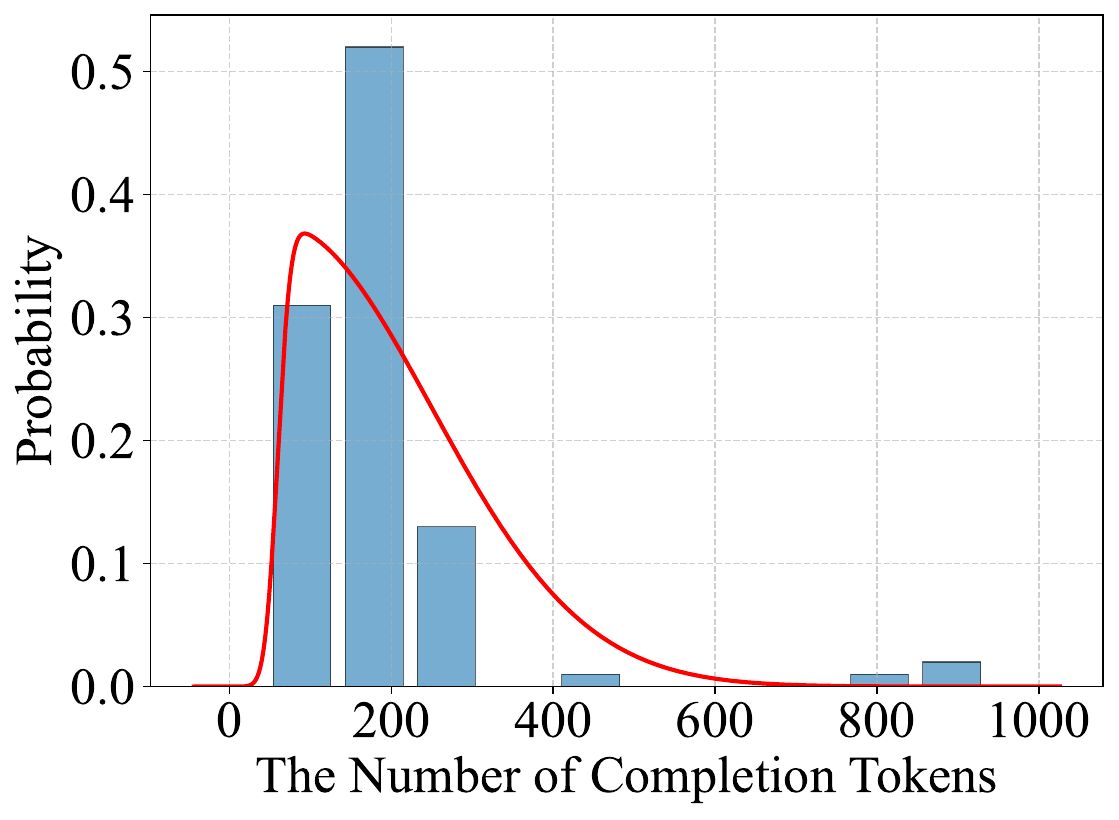}
		\vspace{-0.1in}
		\caption*{(a) Summarization}
	} &
	\parbox[b]{0.28\linewidth}{\centering
		\includegraphics[width=\linewidth]{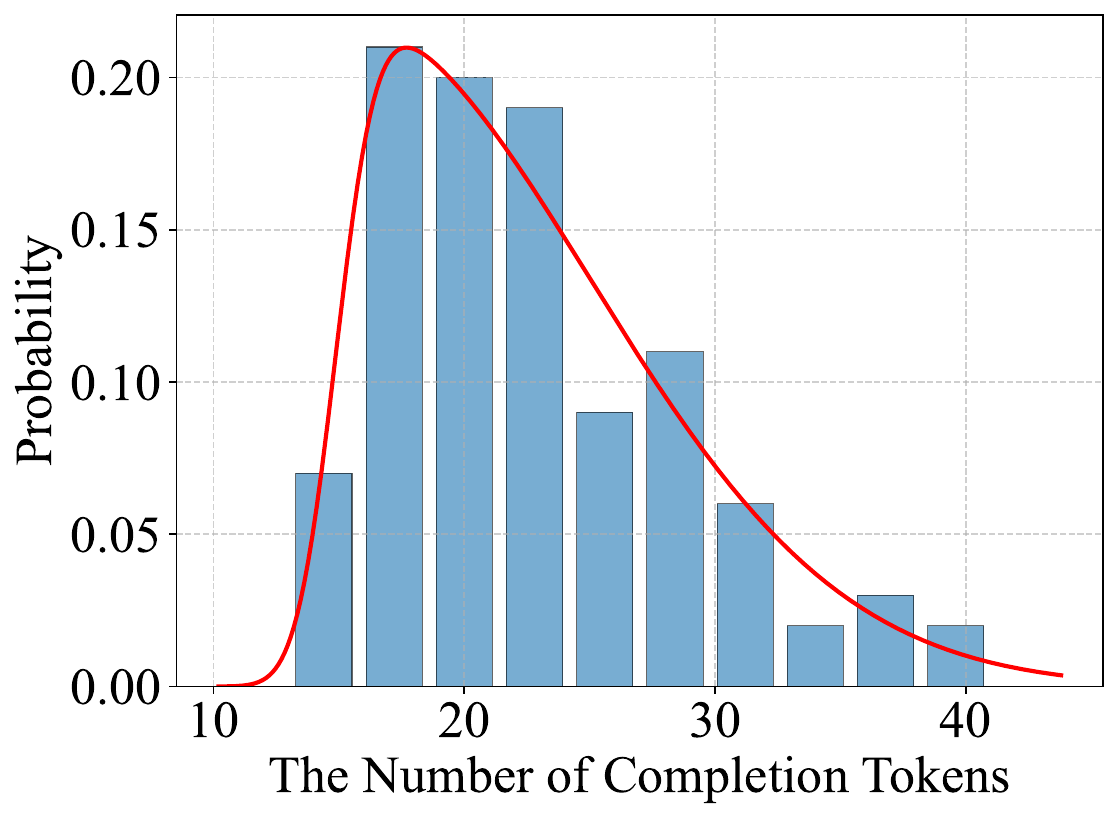}
		\vspace{-0.1in}
		\caption*{(b) Fact Verification}
	}
\end{tabular}
\vspace{-0.1in}
\caption{Prompt and decoding length distribution respectively for the \texttt{generate-summary} and \texttt{generate-queries} inferences in the MapReduce-Summarization agent and the Fact-Verification agent. In each case, we divide the length range into 10 buckets, and calculate the value appearance probability in each bucket (accompanied by the fitted curves assuming skewed Gaussian distribution).}
\label{fig:app_demand}
\vspace{-0.1in}
\end{figure}

\section{Delay Analysis}
\label{subsec:analysis}

In this part we analyze the fairness properties of \textit{Justitia}.
In practice, an inference cannot be started if the available KV block volume is smaller than its prompt size, which may cause the memory fragmentation problem.
For ease of analysis, we assume the prompt size of typical inferences are much smaller in scale than the total KV cache space.
Our measurement show that, in the released Mooncache dataset~\cite{qin2024mooncake}, each request in average only takes 3.20\% of the total KV cache size when running LLaMA-3.3 70B on H100 (with 28.65GB KV cache space).
Therefore, we can faithfully neglect the fragmentation problem.
That is, if there are inferences waiting in the queue, all the KV-blocks on the server would be fully utilized.


\phm{Symbols.}
We let agents be indexed in ascending order of their inference start time.
That is, app-$j$ is the $j$\textsuperscript{th} agent that is allocated KV-blocks in \textit{Justitia}. 
Meanwhile, we let $C_j$ represent the total KV token-time of app-$j$, which is the sum of all the KV token-time of all its inferences. 
We also let $C_{max}$ denote the maximum total KV token-time among all agents. 
Similarly, we let $c_{max}$ represent the maximum KV token-time required by any single inference across all agents. 
Finally, we let $f_j$ denote the completion time of app-$j$ in \textit{Justitia}, and $\bar{f}_j$ the GPS completion time. 
Table~\ref{tbl:notations} summarizes the notations used in the analysis.
Given these definitions, we then establish the constant delay bound of \textit{Justitia} through the following theorem.

\newcommand{\specialcell}[2][c]{%
\begin{tabular}[#1]{@{}c@{}}#2\end{tabular}}
\begin{table}[!th]
   \centering
   \renewcommand{\arraystretch}{1.1}
   \footnotesize
   \caption{Summary of important notations and definitions.}
   \begin{tabular}{|c|l|}
     \hline
     $M$ & The number of KV-blocks in a server \\
     $\bar{f}_j$ & The agent completion time for app-$j$ in GPS \\
     $f_j$ & The agent completion time for app-$j$ in \textit{Justitia} \\
     $C_j$ & The total KV token-time required by app-$j$ \\
     $C_\mathrm{max}$ &  The maximum KV token-time required by any agent \\
     $c_\mathrm{max}$ & The  maximum KV token-time required
by any single inference\\
     $a_j$ & The arrival time of app-$j$\\
     $e_j$ & The ending time of the slowdown period of app-$j$\\
     \hline
   \end{tabular}
   \label{tbl:notations}
\end{table}

\begin{theorem}[Constant delay bound]
	\label{thm:const_delay_bound}
	With \textit{Justitia}, an agent is guaranteed to complete within a constant time after its completion in GPS, i.e., for each app-$j$, we have
	\begin{equation}
		\label{eq:slowdown_bound}
		f_j - \bar{f}_j \leq 2c_\mathrm{max} + C_\mathrm{max} / M.
	\end{equation}
\end{theorem}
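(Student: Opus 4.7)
The plan is to follow the structure of the classical Parekh--Gallager delay bound for Packetized Generalized Processor Sharing (PGPS), since Justitia's virtual-time queuing is essentially PGPS lifted to the LLM-serving setting: the aggregate service rate $M$ plays the role of the link bandwidth, each inference is a non-preemptable ``packet,'' and the application-level KV-token-time $C_j$ plays the role of message length. Two peculiarities must be managed carefully: an application consists of several inferences, so the atomic non-preemptable unit is $c_{\max}$ rather than $C_{\max}$; and KV occupation inflates with generated tokens, but this is already absorbed into the definition of $C_j$ in Sec.~\ref{subsec:modeling}, so the scheduling analysis can treat $C_j$ as a fixed service requirement.

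The skeleton of the argument would be a slowdown-period analysis. I would define the slowdown period $[s_j, e_j]$ of app-$j$ as the maximal interval ending at $e_j = f_j$ throughout which every KV block is occupied in Justitia and app-$j$ is continuously present (arrived but not yet completed). By maximality, at $s_j^-$ either some block was idle or app-$j$ had no pending work, so the system can be ``restarted'' at $s_j$ for the purpose of accounting for app-$j$'s service. Using the small-prompt assumption justified after the notation table, Justitia then delivers exactly $M(e_j - s_j)$ units of KV-token-time during $[s_j, e_j]$, and GPS delivers at most $M(e_j - s_j)$ over any equally-long window (with equality when GPS is also fully busy).

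The core inequality partitions the service Justitia gives to applications other than $j$ during $[s_j, e_j]$ into two pieces. First, at most one inference that was already running at $s_j$ can belong to an app-$k$ with $F_k > F_j$; non-preemption lets it run to completion, contributing at most $c_{\max}$ of ``excess'' service. Second, every app-$k$ scheduled after $s_j$ and served ahead of app-$j$ must satisfy $F_k \le F_j$, and by the definition of virtual finish time such apps also complete no later than app-$j$ in GPS; the standard PGPS gap lemma then bounds the aggregate Justitia-vs-GPS service discrepancy for this class by one inference granularity, i.e., another $c_{\max}$. Summing these two slack terms accounts for the $2c_{\max}$; the residual $C_{\max}/M$ term is the time needed to drain app-$j$'s own outstanding KV-token-time (at most $C_j \le C_{\max}$) at the full saturated rate $M$ once the higher-priority work is cleared. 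Chaining these with the GPS completion time $\bar f_j$ would yield $f_j - \bar f_j \le 2c_{\max} + C_{\max}/M$, as claimed in~(\ref{eq:slowdown_bound}).

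The step I expect to be the real obstacle is the adaptation of the PGPS gap lemma for the apps with $F_k \le F_j$. In the networking proof, this lemma relies on every packet being a single indivisible unit whose size is at most $L_{\max}$; here an application is a composite of several inferences that may execute over a long window, possibly interleaved with later-arriving, even-higher-priority applications. Making the bookkeeping watertight will require showing that the virtual-time order plus inference-level non-preemption induces only a single $c_{\max}$ of cumulative slack per application boundary, rather than one per constituent inference, despite the auto-regressive demand inflation and the possibility of new applications arriving mid-period with smaller $F$ values.
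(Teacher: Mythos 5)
Your proposal sits in the right family (virtual-time fair queuing, busy/slowdown-period accounting at aggregate rate $M$), but the step you yourself flag as the obstacle---the ``PGPS gap lemma'' for the apps with $F_k \le F_j$---is exactly the heart of the theorem, and the paper does not prove it by a gap lemma at all. Instead, the paper identifies app-$m$, the \emph{most recently started} application with $\bar f_m > \bar f_j$ that was allocated KV-blocks before app-$j$, and argues from arrival times: every application that Justitia serves ahead of app-$j$ but that finishes no later than app-$j$ in GPS (the sets $\mathcal{A}$ and $\mathcal{B}$) must have arrived after $b_m$, the instant app-$m$ first received blocks---otherwise Justitia would have scheduled it instead of app-$m$. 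Hence GPS itself must serve all of $\sum_{k\in\mathcal{A}\cup\mathcal{B}} C_k$ (which includes $C_j$) within $[b_m,\bar f_j]$, while Justitia's work in $[b_m,e_j]$ is at most that same sum plus $C_m$ plus $(M-1)c_{\max}$ for inferences already in flight. Subtracting, the higher-priority work cancels and only $C_m/M + \frac{M-1}{M}c_{\max}$ survives, plus the $c_{\max}$ needed to finish app-$j$'s last parallel batch after its slowdown ends. Without this app-$m$/arrival-time construction your sketch has no mechanism to cancel the higher-priority service against GPS, so the bound does not close.

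Two of your specific accounting claims are also wrong for this setting and would make the bookkeeping fail if pursued literally. First, ``at most one inference that was already running at $s_j$ can belong to an app with $F_k>F_j$'' is a single-server (one-packet-in-transmission) fact that does not transfer: the KV-cache pool serves many inferences concurrently, so up to $M-1$ lower-priority inferences can be mid-flight, and, more importantly, the last-started lower-priority application app-$m$ can legitimately receive up to its \emph{entire} demand $C_m$ during the accounting window because it got its blocks before the higher-priority cohort arrived. Second, your attribution of the $C_{\max}/M$ term to ``draining app-$j$'s own outstanding work at rate $M$'' is a mis-reading of where that term comes from: $C_j$ appears on both the Justitia and GPS sides and cancels; the surviving $C_{\max}/M$ is the head start of the lower-priority app-$m$, i.e., the application-granularity analogue of the one-maximum-packet term in Parekh--Gallager. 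Relatedly, your hoped-for conclusion that the adaptation yields ``only a single $c_{\max}$ of cumulative slack per application boundary'' cannot be right, since it would imply a delay bound of order $c_{\max}$ with no $C_{\max}/M$ term, which is stronger than what the theorem (or the scheduler) actually provides.
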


\begin{proof}

Our analysis critically focuses on the \emph{slowdown period} of an agent. In particular, we say an agent is \emph{slowed down} at time $t$ if it has a
\emph{backlogged} inference waiting for service. In other words, at any moment in the slowdown period, the agent could have run more inferences if receiving more KV-blocks. Because slowdown delays the agent completion, bounding the timespan of the slowdown periods is the key to analyzing the longest possible delay.

For each app-$j$, we let $a_j$ be the arrival time of app-$j$. 
Depending on the number of available KV-blocks at time $a_j$, app-$j$ is either slowed down, or allocated a sufficient number of KV-blocks to run all inferences right after the arrival. 
In particular, we let $e_j$ be the time when the slowdown period of app-$j$ ends. 
We then have $a_j < e_j$ if app-$j$ experiences slowdown, and $a_j = e_j$ otherwise. 
In either case, after the slowdown period, app-$j$ runs all the backlogged inferences in parallel, and is guaranteed to complete after at most the maximal inference runtime, i.e.,
	
\begin{equation}
\label{eq:comp_time_bound}
f_j \le e_j + c_\mathrm{max}.
\end{equation}

Therefore, to bound the agent completion time, it is critical to analyze when the slowdown period ends (i.e., $e_j$). 

We consider the most general case where app-$j$ is slowed down right after the arrival, i.e., $a_j < e_j$.
During the slowdown period $[a_j, e_j]$, all the KV-blocks are busy. 
While \textit{Justitia} preferentially offers KV-blocks to agents in ascending order of their GPS completion times, agents may start services out of order due to the dynamic arrivals. 
In particular, an agents that completes before app-$j$ in GPS may arrive late, after app-$j$ starts in \textit{Justitia}. 
Let $\mathcal{A}$ be the set of all these agents, i.e., $\mathcal{A} = \{k \mid k > j \mbox{ and }
	\bar{f}_k \le \bar{f}_j \}$. 
    
On the other hand, an agent that completes after app-$j$ in GPS may start earlier in \textit{Justitia}, before app-$j$ arrives. Let app-$m$ be such an agent that is serviced \emph{the most recently}. That is, $m$  is the \emph{largest} integer satisfying both $m \le j-1$ and $\bar{f}_m > \bar{f}_j$, i.e., $\bar{f}_m > \bar{f}_j \ge f_k$ for all $m < k < j$.
In other words, app-$m$ completes after agents $m+1, \dots, j$ in GPS, but is allocated KV-blocks before all these agents in \textit{Justitia}. 
We let $\mathcal{B}$ be the set of all these agents that complete after app-$j$ in GPS but start earlier in \textit{Justitia}, i.e., $\mathcal{B} = \{ k \mid m < k \le j \}$.

By definition, agents in $\mathcal{A}$ and $\mathcal{B}$,
though completing earlier than app-$m$ in GPS, 
	are serviced no earlier than app-$m$ in \textit{Justitia}.
	These agents must have not yet arrived before app-$m$ is allocated KV-blocks---otherwise \textit{Justitia} would have
	serviced them before app-$m$. We then have
	\begin{equation}
		\textstyle
		\min_{k \in \mathcal{A} \cup \mathcal{B}}{\{ a_k \}} \ge b_m,
		\label{eq:m_arrival_time}
	\end{equation}
	where $b_m$ is the first time when app-$m$ is allocated KV-blocks in \textit{Justitia}.
	This suggests that since $b_m$, GPS has completely serviced,
	\emph{at least}, all agents in $\mathcal{A} \cup \mathcal{B}$ by the time app-$m$ finishes, i.e.,
	\begin{equation}
		\textstyle
		\bar{f}_j \ge b_m + \frac{1}{M} \sum_{k \in \mathcal{A} \cup \mathcal{B}}{C_k}.
		\label{eq:GPS_comp_time_case_2-2}
	\end{equation}
	
We next analyze $f_j$, the completion time of app-$m$ in \textit{Justitia}.
During time interval $[b_m, e_j]$, \textit{Justitia} may have completed all the agents in $\mathcal{A}$ and $\mathcal{B}$, along with at most $M - 1$ inferences requiring a maximal runtime. 
In addition, app-$m$ is allocated KV-blocks at $b_m$, and may also complete before $e_j$. 
Finishing these works using all $M$ KV-blocks takes at most
\begin{equation}
	\label{eq:e_j_bound_case_2-2}
	  \textstyle
	  e_j \le b_m + \frac{1}{M} [ \sum_{k \in \mathcal{A} \cup \mathcal{B}} C_k + C_m 
	  + (M - 1)c_\mathrm{max} ].
	\end{equation}
	Plugging \eqref{eq:e_j_bound_case_2-2} into \eqref{eq:comp_time_bound} and
	subtracting \eqref{eq:GPS_comp_time_case_2-2}, we have
	\begin{equation*}
	  \textstyle
	  f_j - \bar{f}_j \le c_\mathrm{max} + \frac{M-1}{M} c_\mathrm{max} + C_m / M
	  < 2 c_\mathrm{max} + C_\mathrm{max} / M.
	\end{equation*}

This completes the proof. 
\end{proof}

\section{Implementation Details}
\label{sec:implementation}

We have implemented \textit{Justitia} atop vLLM~\cite{vllm}, the mainstream LLM serving framework.
The overall workflow with \textit{Justitia} is shown in Fig.~\ref{fig:workflow}.

\begin{figure}[!th]
    \centering
    \includegraphics[width=0.52\textwidth]{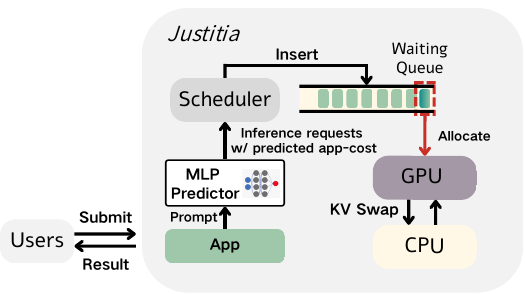} 
    \caption{\textit{Justitia} Workflow} 
    \label{fig:workflow}
\end{figure}

As elaborated in Fig.~\ref{fig:workflow}, each time a user submits an application request, the application-specific (MLP) predictor would first predict the application-level service cost of that application based on user input in the ongoing run (following the method described in Sec.~\ref{subsec:prediction}).
The \textit{Justitia} scheduler would then calculate the virtual completion time of that application---based on its predicted service cost and the current virtual time (as described in Sec.~\ref{subsec:queuing}).
That application-level virtual completion time would be used as the scheduling priority of its inference tasks when inserting them into the task waiting queue. 
For each scheduled inference task, we serve it following the standard vLLM logic.
Each time the KV cache space is full, \textit{Justitia} moves \emph{some} tasks to the swapped queue, with their KV contents temporally swapped to CPU memory; meanwhile, no new tasks can be admitted to the running queue when there are pending tasks in the swapped queue.

\section{Additional Related Works}

Apart from the scheduling works in Sec.~\ref{sec:closely_related}, many additional works seek to accelerate individual LLM requests in a series of aspects.
For example, FlashAttention~\cite{dao2022flashattention} and FlashDecoding~\cite{hong2024flashdecoding}  algorithms seek to accelerate LLM inference at the operator level;
model-quantization~\cite{kim2023squeezellm} and prefill-decode-separation~\cite{zhong_distserve_2024} methods have been proposed to maximize backend utilization;
Speculative decoding methods like \cite{cai2024medusa} are also applied to improve the inference throughput by accelerating the decoding process.

Meanwhile, agent serving is also a popular research topic in the recent literature. 
Apart from the task-parallel nature which is the focus of this paper, modern agents may also stand out by their tool usage capabilities,
as exemplified by \emph{Toolformer}~\cite{schick2023toolformer} (API calling for arithmetic/fact-checking), \emph{Gorilla}~\cite{patil2024gorilla} (large-scale API libraries), and \emph{ToolLLM}~\cite{qin2023toolllm} (multi-tool reasoning frameworks).
For those tool-usage agents, some other optimization techniques have also been proposed~\cite{abhyankarinfercept,zhai2026toolcaching}.
It can be expected that both task-parallel nature and tool-diversifying nature would be common in future agent design, thus their optimization techniques can be jointly applied.

\end{document}